\documentclass[letterpaper, 10 pt, conference, nofonttune]{ieeeconf}
\IEEEoverridecommandlockouts
\overrideIEEEmargins
\pdfminorversion=4
\usepackage{microtype}

\usepackage{cite}
\usepackage[bookmarks=true]{hyperref}
\usepackage{color}
\usepackage{times}
\usepackage{epsfig}
\usepackage{epstopdf}
\usepackage{graphicx}
\usepackage{amsmath}
\usepackage{amssymb}
\usepackage{bbm}
\usepackage[ruled,vlined,linesnumbered]{algorithm2e}
\usepackage{graphicx}
\usepackage{verbatim}
\usepackage{booktabs}
\usepackage{multirow}
\usepackage{makecell}
\usepackage{subfigure}
\usepackage{colortbl,booktabs}
\usepackage{lscape}
\usepackage{url}
\usepackage{makecell}
\usepackage{xcolor,colortbl}

\newcommand{\Real}{\mathbb{R}}

\newcommand{\www}{\mathbf{w}}
\newcommand{\uuu}{\mathbf{u}}
\newcommand{\TT}{\mathbf{O}}
\newcommand{\ttt}{\mathbf{o}}
\newcommand{\XX}{\mathbf{X}}
\newcommand{\xxx}{\mathbf{x}}
\newcommand{\II}{\mathbf{I}}

\newcommand{\vneg}{\vspace{0pt}}

\title{\LARGE \bf Simultaneous View and Feature Selection for Collaborative \\ Multi-Robot Perception}

\author{
    Brian Reily and Hao Zhang%
    \thanks{Brian Reily and Hao Zhang are with the Human-Centered Robotics Lab 
    in the Department of Computer Science
    at the Colorado School of Mines, Golden, CO, 80401.
    Email: \{breily, hzhang\}@mines.edu.}%
}

\begin{document}

\newtheorem{definition}{Definition}
\newtheorem{theorem}{\textbf{Theorem}}
\newtheorem{lemma}{\textbf{Lemma}}
\newtheorem{proposition}{Proposition}
\newtheorem{property}{Property}
\newtheorem{observation}{Observation}
\newtheorem{corollary}{\textbf{Corollary}}

\maketitle
\thispagestyle{empty}
\pagestyle{empty}

\begin{abstract}

Collaborative multi-robot perception provides multiple views of an environment, offering varying perspectives to collaboratively understand the environment even when individual robots have poor points of view or when occlusions are caused by obstacles. These multiple observations must be intelligently fused for accurate recognition, and relevant observations need to be selected in order to allow unnecessary robots to continue on to observe other targets. This research problem has not been well studied in the literature yet. In this paper, we propose a novel approach to collaborative multi-robot perception that simultaneously integrates view selection, feature selection, and object recognition into a unified regularized optimization formulation, which uses sparsity-inducing norms to identify the robots with the most representative views and the modalities with the most discriminative features. As our optimization formulation is hard to solve due to the introduced non-smooth norms, we implement a new iterative optimization algorithm, which is guaranteed to converge to the optimal solution. We evaluate our approach through a case-study in simulation and on a physical multi-robot system. Experimental results demonstrate that our approach enables effective collaborative perception through accurate object recognition and effective view and feature selection.

\end{abstract}

\section{Introduction}

Collaborative multi-robot perception enables a group of robots
to combine their individual observations to collectively gain a unified
understanding of the environment \cite{schmickl2006collective}.
Multi-robot systems can
provide multiple observations of objects, 
and usually enable views of the objects from different points of view.
In real-world scenarios, 
such as in disaster response and search and rescue applications \cite{baxter2007multi,correll2009multirobot},
individual robots can have their views occluded or sometimes
fully obstructed by obstacles and other environmental factors.
Collaborative perception can rely on the observations that contain
the most representative views of objects
obtained from multiple robots to effectively and
collaboratively perceive the objects despite the limitations of
individual robots.

\begin{figure}[t]
    \centering
    \includegraphics[width=0.48\textwidth]{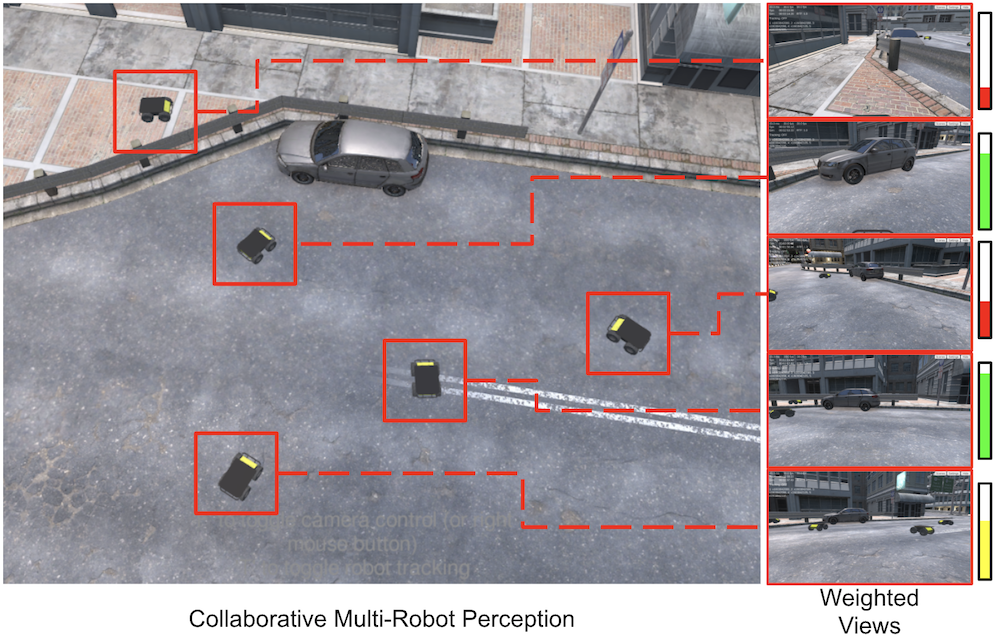}
    \vneg
    \caption{When a group of robots observe a target object in the environment,
    some robots may have partially occluded or completely obstructed views.
    Selecting the most representative views improves object recognition performance 
    and allow robots with less representative views to continue on to observe other
    targets. 
    }  
    \label{fig:overview}
\end{figure}

We address the collaborative multi-robot perception problem of recognizing an object 
from multiple views provided by a collection of robots operating around the object in the environment.
This perception problem becomes collaborative as the observations from multiple robots equipped with a variety of sensors
need to be combined into a unified understanding of the object.
Effectively fusing multi-modal observations from multiple robots relies on the
identification of the most informative views and modalities.
It not only increases the accuracy of object recognition, but also
enables robots to be distributed where they are most useful. 
For example,
robots behind obstacles can be assigned elsewhere, 
or robots equipped with LiDaR sensors can be assigned to dark areas while
those with only visual cameras can be used in an environment with
good lighting conditions.


Previous research has investigated both recognition from multiple views
and discriminative sensor and view selection.
Multi-view recognition methods were developed to recognize objects
\cite{thomas2006towards}, gait \cite{kusakunniran2010support},
and actions \cite{spurlock2016dynamic} from multiple points of view.
The previous approaches are generally based on 
visual features manually engineered for their specific applications.
Sensor and view selection has been studied in different ways,
e.g. based on sparse coding \cite{krause2012near} and uncertainty
reduction \cite{joneidi2017dynamic},
and applied to sensor placement and path planning in order to maximize
observations.
However, view selection and feature modality selection are
treated as two independent procedures.
The key research problem of how to enable both in a unified approach 
has not been well addressed,
especially in the area of collaborative perception for object recognition.


In this paper, we introduce a novel approach to collaborative
multi-robot perception that enables object recognition from 
multiple views of a group of robots 
while simultaneously enabling selection of 
discriminative views and features.
We propose a formulation based on regularized optimization that
identifies the combination of views that best represents
and recognizes objects,
providing a unified framework that incorporates recognition,
view selection, and feature selection.
Through the introduction of structured sparsity-inducing norms as
regularization terms during the optimization,
we identify both the robots with the most representative views, as well
as the most discriminative feature modalities.
Due to non-smooth norms and parameter dependence, 
our formulated regularized optimization is hard to solve.
We implement a new iterative solver, which we prove is theoretically guaranteed to
converge to the optimal solution.
We perform extensive evaluation utilizing a high-fidelity
robotics simulator and a physical multi-robot system.
Experimental results show that our approach
achieves both accurate recognition as well as effective view selection,
and shows the effectiveness of our proposed regularization
terms.

This paper has two key contributions:
\begin{itemize}
    \item First, we introduce a principled approach to 
    collaborative multi-robot perception which unifies multi-view 
    object recognition, view selection, and
    feature selection into a single formulation based on the mathematical
    framework of regularized optimization.
    \item Second, we implement a new iterative solution algorithm to solve
    the proposed formulation, which has a theoretical guarantee 
    to converge to the optimal solution.
\end{itemize}


\section{Related Work} \label{sec:related}



\subsection{Multi-Robot Perception}

Our proposed work closely aligns with the research problem that
optimally places and controls robots based on their observations.
Many methods attempt only the placement of fixed sensors to maximize
observations \cite{parker1997cooperative,ma2010dynamic,domingo2016sensor}.
However, many others work to coordinate multiple robots to maximize
sensor coverage, such as with power- and capability-limited robots
used in swarms \cite{singh2009efficient},
or to identify corresponding objects between 
observations \cite{gao2020regularized}.
Similar approaches work by dividing swarms into subgroups to
maximize the monitoring of multiple areas of
interest \cite{inacio2018persistent}, deploying \cite{liu2018optimal} 
or dividing \cite{reily2020representing} heterogeneous teams
of robots based on their sensing capabilities,
or attempting to merge sensor observations to provide
`collective perception' \cite{schmickl2006collective}.
In contrast to these approaches, we formulate our method to
integrate object recognition as well as identifying the individual
robots with the most discriminative observations.
By identifying the robots with the most representative views as they are
observing a target and the most discriminative sensing modalities,
we allow the remainder of the robots to continue
on to other tasks.

Another relevant research area is active perception,
with the objective to adjust sensor positions and settings
to obtain optimal views of objects \cite{chen2011active}.
In multi-robot systems, active perception has been
applied to navigation planning and target tracking.
Navigation planning approaches apply active perception as a reward
or constraint, where paths that involve the most observations
of areas are rewarded \cite{chiu2014constrained,best2016multi}.
Target tracking approaches apply active perception to the goal
of observing a single or set of targets as opposed to observing
an area.
Approaches have been based on Kalman filters \cite{dietl2001cooperative},
scheduling algorithms \cite{shi2015probabilistic}, and entropy
measures \cite{hausman2015cooperative}.

In computer vision applications, multi-view perception has been applied to a
variety of recognition tasks, utilizing fixed sensors or cameras.
Multi-view recognition has been applied to object recognition
\cite{thomas2006towards,wu2016multi}, gait recognition
\cite{kusakunniran2010support}, and action recognition
\cite{spurlock2016dynamic}.
Multiple views typically increase recognition accuracy, but can also
incur a high computation cost.
Because of this challenge, approaches have been developed to narrow down
the number of views and identify the views that can best represent
objects \cite{mokhtarian2000automatic,moreira2006best}.
Several methods were also designed to select the 2D views that best reconstruct a 3D model
\cite{laga2010semantics,mendez2016next,genova2017learning,wang2017view}.
Methods were also implemented to identify objects
from multiple cameras, including search algorithms \cite{wang2016constructing},
convolutional neural networks \cite{kim2017category},
and semantics \cite{guerin2018semantically}.

\subsection{Feature Selection}

Similar to our goal of identifying the most discriminative views among
a group of robots, many approaches have been developed to select features
\cite{zhang2019feature} and sensors that minimize redundancy
while maximizing information.
Feature selection approaches identify the most discriminative
features, either as a problem of dimensionality reduction or
to identify the certain features that correspond to specific
situations.
This has been done by selecting features that reduce uncertainty in
a graphical model \cite{krause2012near}, utilizing sparsity to
identify feature modalities that best describe an environment
\cite{han2017sral}, learning weights of different features
\cite{tang2018consensus}, or evaluating information content to avoid
selecting redundant features \cite{peng2005feature}.

The selection of discriminative sensors in sensor networks has been
studied from a variety of directions \cite{gravina2017multi}.
For target tracking, sensors can be chosen based on their
capabilities of location \cite{wang2018sensor,xiao2017divide}
or to best provide coverage of an entire area \cite{zhang2015distributed}.
Selection methods have also been introduced based on task allocation
algorithms \cite{tkach2017multi}, sparsity
\cite{joneidi2017dynamic,liu2015sparsity},
cross entropy \cite{zhang2018spatial}, and even randomized
algorithms \cite{bopardikar2019randomized}.

The previous multi-view recognition and feature selection approaches 
generally treat view section and feature modality selection as two separate and independent procedures.
The problem of how to address both in a unified approach
has not been well addressed,
especially in the area of collaborative perception for object recognition.

\section{Our Proposed Approach}
\label{sec:approach}

In this section, we introduce our approach to collaborative
perception using an optimization-based formulation that
unifies recognition and the selection of discriminative views
and feature modalities.

\emph{Notation}.
In this paper, we denote matrices using boldface uppercase letters and
vectors using boldface lowercase letters.
For a matrix $\mathbf{M} = \{ m_{ij} \}$, we denote its $i$-th row
as $\mathbf{m}^i$ and its $j$-th column as $\mathbf{m}_j$.
The $\ell_1$-norm of a vector $\mathbf{v} \in \Real^n$ is
$\| \mathbf{v} \|_1 = \sum_{i=1}^{n} | v_i |$ and the $\ell_2$-norm is
$\| \mathbf{v} \|_2 = \sqrt{\mathbf{v}^\top \mathbf{v}}$.

\subsection{Problem Formulation}

We define the problem of collaborative multi-robot perception
as combining the available views from a group of robots in order to
identify an object.
Our formulation learns a linear combination of views to
represent a given object, fusing the visual features
from each to identify the object that they best represent,
based on the value of the final objective function.

Formally, we define the observations from $n$ robots as
$\XX = [ \xxx^1 ; \dots ; \xxx^n ] \in \Real^{n \times d}$,
where $\xxx^i \in \Real^{d}$ is the feature vector denoting the observations
of the $i$-th robot.
As each robot can utilize multiple sensors or describe observations
as multiple forms of feature representations, each vector
$\xxx^i$ contains representations from $m$ modalities, where
$d = \sum_{i=1}^{m} d_i$.
We denote $p$ categories of objects as
$\TT = [ \ttt_1, \dots, \ttt_p ] \in \Real^{d \times p}$, where
$\ttt_j \in \Real^{d}$ is a feature vector representing the $j$-th object.
Objects are encoded with the same feature
types that are able to be utilized by the robots, defined from a single
view of an object (e.g., if robots represent their observations with a
histogram of RGB values, then each object is encoded with the same type 
of histogram formed from a representative view of the object).
Then, we formulate object recognition in our collaborative multi-robot perception
approach with the following loss function:
\begin{align}
\min_{\www} & \| \XX^\top \www - \ttt_j \|_2^2
\label{eq:loss}
\end{align}
where we approximate each object $\ttt_j$ for $j = 1, \dots, p$
through a linear combination of the available views from the multi-robot system.
The views are weighted by the weight vector $\www \in \Real^{n}$,
where $w_i$ represents the importance of the $i$-th robot's view in
approximating the object.

\subsection{Learning Discriminative Views and Modalities}

When a group of robots are observing an object, 
a subset of robots will have more representative views of the object, 
and certain feature
modalities will be much more informative than others.
We introduce regularization terms in our formulation to identify the robots
with the most discriminative views and the most relevant features.

First, we learn the most informative views, both to rely on them
for accurate recognition and to identify the robots with the best views,
so that the remaining robots can continue on in the environment to perform
other tasks.
To do this, we introduce $\| \www \|_1$, utilizing the $\ell_1$-norm on
the weight vector $\www$.
This regularization term induces sparsity in this weight vector, forcing
most values to $0$ or very small values near $0$ and limiting high
weights to the most discriminative views.
With this, our formulation is
\begin{align}
\min_{\www} & \| \XX^\top \www - \ttt \|_2^2 + \lambda \| \www \|_1
\label{eq:loss_with_l1}
\end{align}
where $\lambda$ is a hyperparameter controlling the importance of this
sparsity-inducing norm.

Second, we learn the most discriminative
feature modalities, not only to further improve recognition accuracy,
but also to identify the sensing modalities that are most useful in the
environment - i.e., due to varying lighting conditions and views between
known objects and observations, certain modalities and representations
could be significantly more informative.
To achieve this,
we introduce a weight vector $\uuu \in \Real^{d}$, which
denotes the importance of each feature modality to the view weight vector:
\begin{align}
\| \XX \uuu - \www \|_2^2
\label{eq:modal_loss}
\end{align}
This loss-like regularization term influences the learning of $\www$, 
by incorporating the weighting of individual features.
It also enables us to identify the importance of each feature
modality, as $\uuu = [ \uuu^1, \dots, \uuu^m ]$ where
$\uuu^i \in \Real^{d_i}$ specifically represents the importance
of the $i$-th feature modality to the view selection weights in $\www$.

Then, to identify the discriminative feature
modalities, we introduce a group $\ell_1$-norm applied to  $\uuu$ as the regularization, 
which is termed the
\emph{modality} norm:
\begin{equation}
\| \uuu \|_M= \sum_{i=1}^{m} \| \uuu^i \|_2
\end{equation}
This regularization term utilizes the $\ell_2$-norm within feature modalities
and the $\ell_1$-norm between modalities.
The $\ell_2$-norm causes weights within a modality to become similar, while
the $\ell_1$-norm induces sparsity between them, causing only the most
discriminative modalities to have non-zero weights.

With these two regularization terms to identify discriminative views and feature modalities, respectively, the proposed final problem formulation becomes:
\begin{equation}
\min_{\www,\uuu}  \| \XX^\top \www - \ttt \|_2^2 +
\lambda_1 \| \XX \uuu - \www \|_2^2 + \lambda_2 \| \www \|_1 +
\lambda_3 \| \uuu \|_M
\label{eq:full}
\end{equation}
where $\lambda_1$, $\lambda_2$, and $\lambda_3$ are hyperparameters that control
the importance of the regularization terms.

This optimization problem is calculated for each object.
The recognized label of a given object is the category that has the lowest value of the
objective function, since it is best represented by the multiple views.
This proposed formulation simultaneously integrates view and feature selection, as well as object
recognition into the unified regularized optimization framework,
without requiring a separate classifier for object recognition. 

\begin{algorithm}[tb]
\SetAlgoLined
\SetKwInOut{Input}{Input}
\SetKwInOut{Output}{Output}
\SetNlSty{textrm}{}{:}
\SetKwComment{tcc}{/*}{*/}


\Input{
$\XX = [ \xxx^1 ; \dots ; \xxx^N ] \in \Real^{n \times d}$,
$\TT = [ \ttt_1, \dots, \ttt_p ] \in \Real^{d \times p}$.
}
\Output{
$\ttt^*$ (the recognized object) and
$\www^*$ and $\uuu^*$ (weight vectors identifying the most
representative views and discriminative features).
}
\BlankLine

\ForEach{column vector $\ttt$ in $\TT$}{

Let $i = 1$. Initialize $\www$ by solving Eq. (\ref{eq:loss})
and $\uuu$ by then minimizing Eq. (\ref{eq:modal_loss}).

\Repeat{convergence}{
Calculate $\mathbf{D}^w (i + 1)$, where the diagonal is equal to
$\frac{1}{2 \www \left( i \right)}$.

Calculate $\mathbf{D}^u (i + 1)$, where the $j$-th diagonal block
is equal to $\frac{1}{2 \| \uuu^j \left( i \right) \|_2} \II_j$.

Calculate $\www \left( i + 1 \right)$ via
    Eq. (\ref{eq:solve_w}).

Calculate $\uuu \left( i + 1 \right)$ via
    Eq. (\ref{eq:solve_u}).

$i = i + 1$.
}

Compute the value of Eq. (\ref{eq:full}) using $\www \left( i \right)$ and
$\uuu \left( i \right)$.

\uIf{objective value is lowest}{
$\ttt^* = \ttt$, $\www^* = \www \left( i \right)$, $\uuu^* = \uuu \left( i \right)$
}
}

\Return the recognized object $\ttt^*$ and
the associated $\www^*$ and $\uuu^*$.

\caption{Our iterative algorithm to solve the formulated regularized optimization
problem in Eq. (\ref{eq:full}).}
\label{alg:solution}
\end{algorithm}

\subsection{Optimization Algorithm}

Due to the non-smooth norms utilized to identify the most representative
views and most discriminative features and the interdependence of the
$\www$ and $\uuu$ weight vectors, our formulation is hard to solve.
We introduce an iterative algorithm to solve the formulation in
Eq. (\ref{eq:full}), which alternately solves for $\www$ and $\uuu$ at
each iteration until convergence.
We show that our approach is theoretically guaranteed to converge
to optimal values for the weight vectors.

First, we solve $\www$ by taking the derivative of the objective
function with respect to $\www$ and setting it to $0$:
\begin{align}
2 \XX \XX^\top \www - 2 \XX \ttt - 2 \lambda_1 \XX \uuu + 2 \lambda_1 \II \www + \lambda_2 \mathbf{D}^w \www = \mathbf{0}
\end{align}
Here, $\II$ is the identity matrix and 
$\mathbf{D}^w \in \Real^{n \times n}$ is a diagonal matrix
where the $i$-th diagonal element is $\frac{1}{2 w_i}$, corresponding
to the partial derivative of $\| \www \|_1$.
After rearrangement, we see that $\www$ is updated by
\begin{align}
\www = \left( \XX \XX^\top + \lambda_1 \II + \frac{\lambda_2}{2} \mathbf{D}^w\right)^{-1} \XX \left( \ttt + \lambda_1 \uuu \right)
\label{eq:solve_w}
\end{align}

Second, we now solve for $\uuu$ by taking the derivative of the
objective function with respect to $\uuu$ and setting it equal
to $0$:
\begin{align}
2 \lambda_1 \XX^\top \XX \uuu - 2 \lambda_1 \XX^\top \www + \lambda_3 \mathbf{D}^u \uuu = \mathbf{0}
\end{align}
where $\mathbf{D}^u$ is a block diagonal matrix corresponding
to the partial derivative of $\| \uuu \|_M$, with its $i$-th block
as $\frac{1}{2 \| \uuu^i \|_2} \II$, where $\uuu^i$ is the section
of the weight vector $\uuu$ corresponding to the $i$-th modality
and $\II$ is an identity matrix with appropriate dimensions for 
$\uuu^i$.
We rearrange it to see that $\uuu$ is updated by
\begin{align}
\uuu = \left( \lambda_1 \XX^\top \XX + \frac{\lambda_3}{2} \mathbf{D}^u \right)^{-1} \lambda_1 \XX^\top \www
\label{eq:solve_u}
\end{align}

In each update step, we assume that the other weight vector is 
fixed (e.g., in Eq. (\ref{eq:solve_w}) we treat $\uuu$ as fixed from 
the previous iteration).
We alternate these update steps until the values of each variable converge.
Algorithm \ref{alg:solution} details our implemented optimization solver.
In the following, we show that our proposed algorithm theoretically converges to the optimal solution.

\begin{theorem}\label{thm1}
The inner loop of Algorithm 1 is guaranteed to converge to the optimal
solution to the formulated regularized optimization problem in Eq. (\ref{eq:full}).
\end{theorem}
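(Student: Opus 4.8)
The plan is to prove Theorem~\ref{thm1} by a majorization--minimization (iteratively reweighted least squares) argument. First I would record that, for a fixed target $\ttt$ (so that the inner loop is what is in question), the objective in Eq.~(\ref{eq:full}),
\begin{equation}
J(\www,\uuu) \;=\; \| \XX^\top \www - \ttt \|_2^2 + \| \XX \uuu - \www \|_2^2 + \lambda_1 \| \www \|_1 + \lambda_2 \| \uuu \|_M,
\end{equation}
is jointly convex in $(\www,\uuu)$: the first two terms are convex quadratics (affine maps composed with $\|\cdot\|_2^2$), and $\|\cdot\|_1$ and $\|\cdot\|_M$ are norms, hence convex. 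Therefore any pair satisfying the first-order (subgradient) stationarity conditions $\mathbf{0}\in\partial_\www J$ and $\mathbf{0}\in\partial_\uuu J$ is a global minimizer, and it remains to show that the iterates generated by Eqs.~(\ref{eq:solve_w}) and~(\ref{eq:solve_u}) monotonically decrease $J$ and that a fixed point of these updates is stationary.

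The engine of the monotonicity argument is the elementary inequality
\begin{equation}
\sqrt{a} - \frac{a}{2\sqrt{b}} \;\le\; \frac{\sqrt{b}}{2}, \qquad a\ge 0,\ b>0,
\end{equation}
which is a rearrangement of $(\sqrt{a}-\sqrt{b})^2\ge 0$ and is an equality exactly when $a=b$. Instantiating it with $a=w_i^2$, $b=w_i(t)^2$ coordinatewise gives $\lambda_1\|\www\|_1 \le \lambda_1\sum_i \frac{w_i^2}{2|w_i(t)|} + \tfrac{\lambda_1}{2}\|\www(t)\|_1$, and instantiating it with $a=\|\uuu^j\|_2^2$, $b=\|\uuu^j(t)\|_2^2$ blockwise gives the analogous bound for $\lambda_2\|\uuu\|_M$ written through $\mathbf{D}^u(t+1)$. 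Substituting these reweighted quadratic upper bounds for the two non-smooth norms in $J$ produces, for each update, a convex quadratic surrogate that majorizes $J$ and is tight at the current iterate (the additive constants are chosen precisely so that equality holds there), and by construction its gradient vanishes exactly at the point given by Eq.~(\ref{eq:solve_w}), resp.\ Eq.~(\ref{eq:solve_u}), with the diagonal weights $\mathbf{D}^w(t+1)$, resp.\ $\mathbf{D}^u(t+1)$. The usual majorization chain --- value of $J$ at the new iterate $\le$ surrogate at the new iterate $\le$ surrogate at the old iterate $=$ value of $J$ at the old iterate --- applied first to the $\www$-step (with $\uuu$ frozen) and then to the $\uuu$-step (with the updated $\www$ frozen) yields $J(\www(t+1),\uuu(t+1)) \le J(\www(t),\uuu(t))$. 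Since $J\ge 0$, the sequence of objective values is non-increasing and bounded below, hence convergent; and since the regularizers make $J$ coercive, the iterates remain in a bounded set.

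It remains to identify the limit with a global minimizer. At any fixed point $(\www^\star,\uuu^\star)$ of the alternating updates, the reweighting matrices $\mathbf{D}^w,\mathbf{D}^u$ are themselves built from $(\www^\star,\uuu^\star)$, so the gradient-zero equations that define Eqs.~(\ref{eq:solve_w}) and~(\ref{eq:solve_u}) reduce to the subgradient optimality conditions $\mathbf{0}\in\partial_\www J(\www^\star,\uuu^\star)$ and $\mathbf{0}\in\partial_\uuu J(\www^\star,\uuu^\star)$ --- the reweighted term $\mathbf{D}^w\www$ evaluated at $\www^\star$ being proportional to $\mathrm{sign}(\www^\star)$, a subgradient of $\|\www\|_1$, and $\mathbf{D}^u\uuu$ at $\uuu^\star$ being proportional to a subgradient of $\|\uuu\|_M$. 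By joint convexity this makes $(\www^\star,\uuu^\star)$ a global optimum of Eq.~(\ref{eq:full}), which together with the monotone descent proves the claim.

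The main obstacle is not the descent bookkeeping, which is routine once the inequality above is available, but two technical points. First, $\mathbf{D}^w$ and $\mathbf{D}^u$ blow up when some $w_i$ or some block $\uuu^j$ equals zero; the standard remedy is to smooth each norm --- replace $|w_i|$ by $\sqrt{w_i^2+\varepsilon}$ and $\|\uuu^j\|_2$ by $\sqrt{\|\uuu^j\|_2^2+\varepsilon}$ --- prove monotonicity and fixed-point optimality for the smoothed problem, and then take $\varepsilon\to 0$ (alternatively, argue the iterates stay bounded away from these degeneracies). Second, to upgrade convergence of the \emph{objective values} to convergence of the \emph{iterates} one invokes compactness: by coercivity the iterate sequence has limit points, the (smoothed) update map is continuous so every limit point is a fixed point, and hence, by the previous paragraph, a global minimizer; if $\XX\XX^\top$ is nonsingular the $\www$-block is strictly convex and the minimizer is unique, so the entire sequence converges to it.
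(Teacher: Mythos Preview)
Your argument is the standard iteratively-reweighted-least-squares / majorization--minimization proof for this family of problems, and it is essentially correct. The paper itself defers the proof entirely to supplementary materials, so a line-by-line comparison from the main text is not possible; however, the structure of Algorithm~\ref{alg:solution} --- with the diagonal reweighting matrices $\mathbf{D}^w$ and $\mathbf{D}^u$ built from $\frac{1}{2|w_i|}$ and $\frac{1}{2\|\uuu^j\|_2}\II_j$ --- is precisely the IRLS scheme your surrogate inequality justifies, so it is very likely that the supplementary proof follows the same route (this is the standard argument going back to, e.g., Nie et al.\ for $\ell_{2,1}$-regularized problems).

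Two small remarks. First, your uniqueness claim at the end slightly understates what you have: the term $\|\XX\uuu-\www\|_2^2$ already contributes a full $\|\www\|_2^2$, so the $\www$-block is strictly convex regardless of whether $\XX\XX^\top$ is nonsingular; the genuine non-uniqueness, if any, lives in the $\uuu$-block, since typically $d\gg n$ and $\XX^\top\XX$ is singular. Second, your coercivity claim implicitly needs $\lambda_2>0$ (otherwise $\uuu$ can escape along the null space of $\XX$ without increasing $J$), which is worth making explicit. Neither point undermines the proof; they are just places where the write-up could be tightened.
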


\begin{proof}
See Appendix.
\end{proof}

The time complexity of Algorithm \ref{alg:solution} is dominated
by Steps (6) and (7) in each iteration.
Steps (4) and (5) are trivial, being only linear in complexity.
Steps (6) and (7) each require a matrix inverse, which could be solved as
a system of linear equations as opposed to explicitly finding the inverse, and
so respectively require $\mathcal{O} \left( n^2 \right)$ and
$\mathcal{O} \left( d^2 \right)$ to solve.
As typically $d \ggg n$, our approach's complexity is bounded by
the dimensionality of features chosen, and not the number of 
robots.

\section{Experimental Results}\label{sec:results} 

To effectively assess the performance of our proposed collaborative
multi-robot recognition approach, we evaluated through a high-fidelity 
robot simulator, and also executed our approach on physical robots.
We also discuss the effect
of the hyperparameters $\lambda_1$ and $\lambda_2$.

In both of our experiments, we utilize three feature modalities.
First, we utilize a color histogram, based on the RGB
values of each pixel.
Second, we utilize a Histogram of Oriented Gradients (HOG) \cite{dalal2005histograms}
to describe the shape of the observation.
Finally, we describe the textures in an observation using Local
Binary Pattern (LBP) features \cite{ojala1994performance}.
Each observation is described fully, i.e. the object to be recognized is
not defined by a bounding box from labeling or detection.
This enables us to evaluate our approach as it would actually be
used in the real world, where object detection is unavailable on
computing-limited robots.

\begin{figure*}
    \centering
    \subfigure{
        \centering
        \includegraphics[height=1.6in]{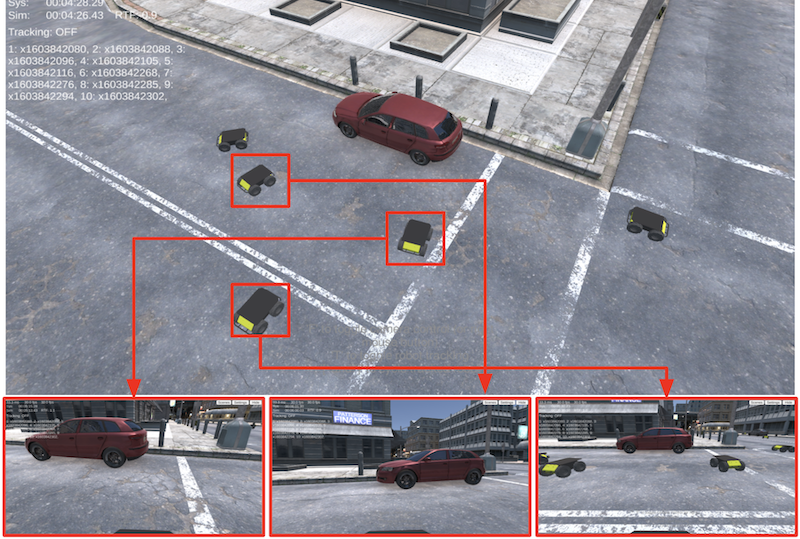}
        \label{fig:dcist_a}
    }%
    \subfigure{
        \centering
        \includegraphics[height=1.6in]{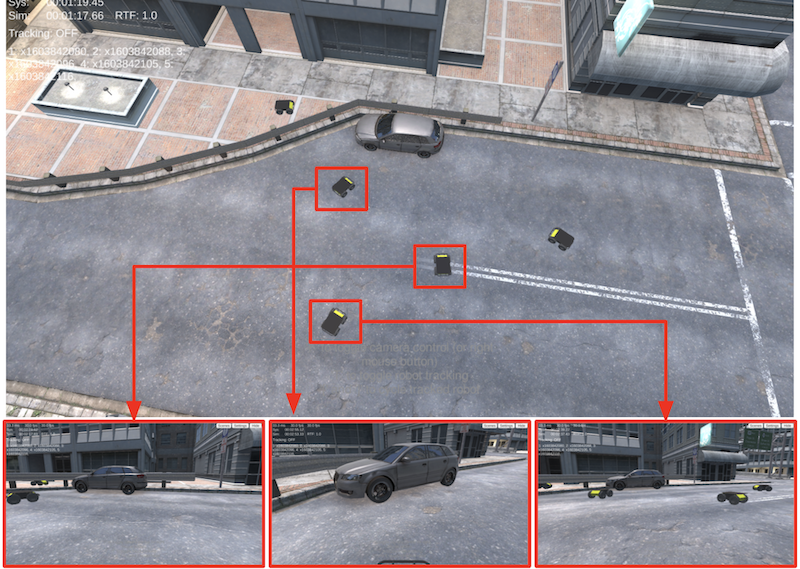}
        \label{fig:dcist_b}
    }%
    \subfigure{
        \centering
        \includegraphics[height=1.6in]{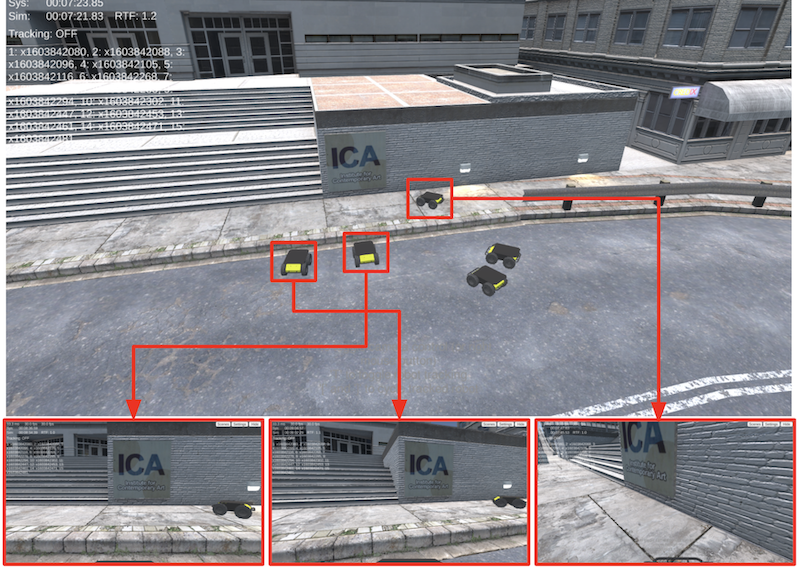}
        \label{fig:dcist_c}
    }%
    \vneg
    \caption{We conducted our case-study in a high-fidelity robot
    simulator with three objects -- the red car, the gray car, and the
    sign seen in these overhead scenes.
    The three best views from the multi-Husky system 
    as ranked by our approach are shown beneath each overhead scene. 
    }
    \label{fig:dcist}
\end{figure*}

\subsection{Evaluation in Multi-Robot Simulation}

We first performed a case-study evaluation in a high-fidelity simulator,
which allowed for the simulation of real robots and sensors through
a ROS interface.
We chose three different objects: two cars, a red one and a gray one, 
in which the known reference
view was from the side, and a large sign with the text `ICA',
with the known view being straight on.
We utilized a system of 5 Husky robots equipped with RGB cameras.

Qualitative results from this can be seen in Figure \ref{fig:dcist},
which shows the top three views from each different scene as ranked
by our approach.
We can see that our approach is able to select the Husky robots which have
qualitatively good views of each object, allowing the other robots
to continue on to other tasks.
These views seem straightforward for a human to select when viewing the scene
after the fact, but when viewed from overhead clearly show that our approach
is selecting robots with good angles and unobstructed views.
In a search and rescue application, our approach can clearly identify when a 
multi-robot system has located a target, allowing the remainder of the
robots to continue on to search for further targets.

\subsection{Evaluation on a Physical Multi-Robot System}

\begin{figure*}
    \centering
     \subfigure[Multi-Robot Views]{
        \centering
        \includegraphics[width=0.31\textwidth]{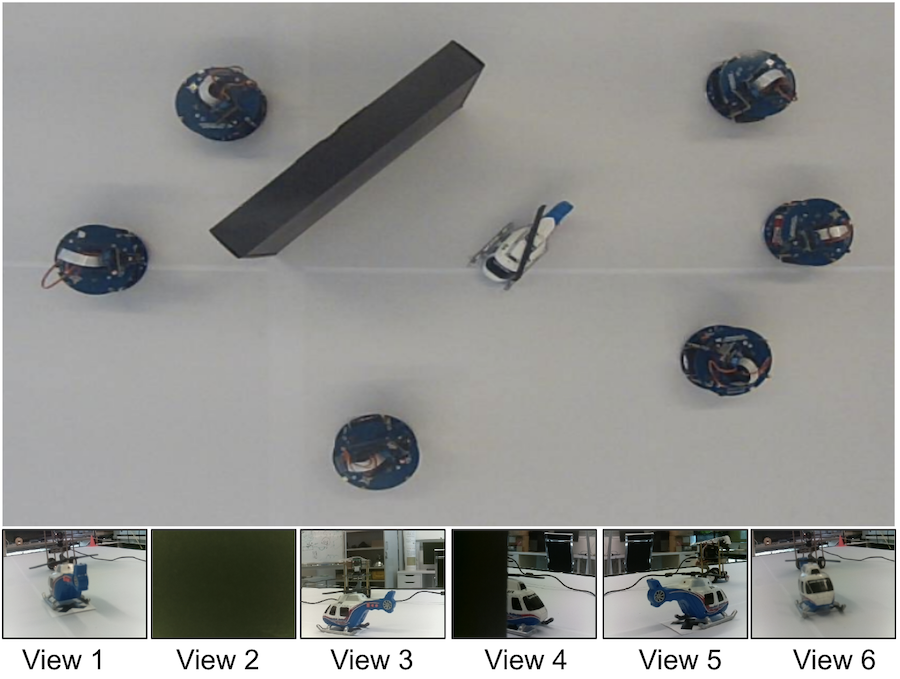}
        \label{fig:sparsity_setup}
    }%
    \subfigure[Recognition Accuracy]{
        \centering
        \includegraphics[width=0.31\textwidth]{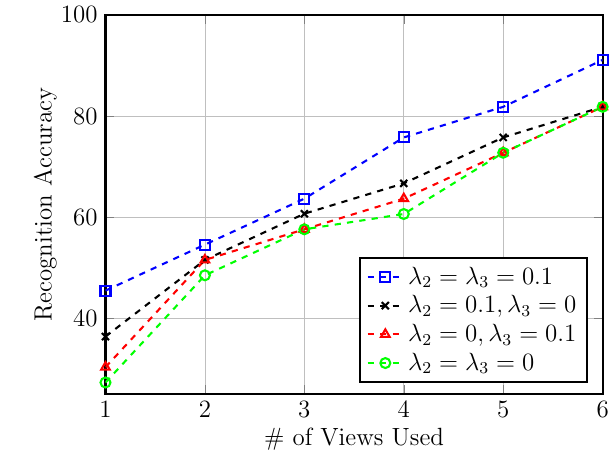}
        \label{fig:table_results}
    }%
    \subfigure[Mutual Information]{
        \centering
        \includegraphics[width=0.32\textwidth]{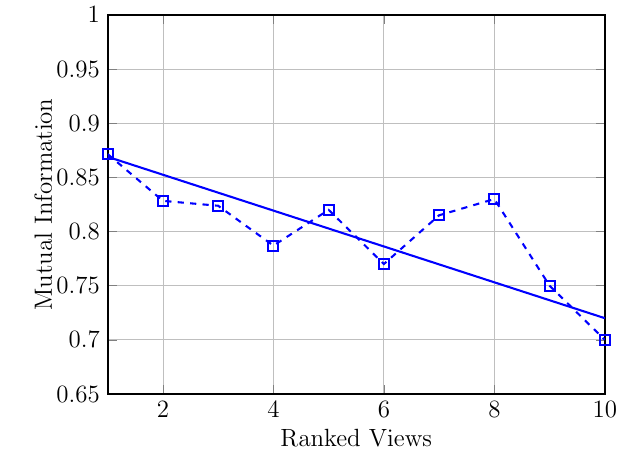}
        \label{fig:dataset_table_mi}
    }%
    \vneg
    \caption{Quantitative results obtained our collaborative perception approach.
    Figure \ref{fig:table_results} shows the recognition accuracy on the
    physical multi-robot system. Figure \ref{fig:dataset_table_mi}
    shows the correlation between the views selected by our approach
    and the mutual information they share with the ground truth objects.
    }
    \label{fig:results_table}
\end{figure*}

Next, we evaluated our approach on a physical multi-robot system.
This system consists of an overhead camera for robot tracking
and six robots, each using a Raspberry Pi 3+
for on-board computing and equipped with an RGB camera.
The set of objects consisted of eight objects of varying size,
shape, and color.
For each object, multiple sets of data were collected with a
varying amount of obstructions by obstacles.
Robots were positioned surrounding the object, with views that
were either unobstructed, partially obstructed to different
degrees, or fully obstructed.
An overhead view of this setup can be seen in 
Figure \ref{fig:sparsity_setup}.

Quantitative results on recognition are reported in 
Figure \ref{fig:table_results}.
We evaluate four versions of our approach, by setting $\lambda_2$
and $\lambda_3$ to either $0$ or $0.1$ (for both a baseline analysis and 
an optimal parameter setting, described later in Section \ref{sec:disc}).
We evaluate our approach by selecting a random subset of $n$ views, from
a single view to all six views, and repeat this 10 times for each
value of $n$.
We observe consistently that when any number of views are used, setting
$\lambda_2 = \lambda_3 = 0$ performs the worse compared to any other
set of parameters.
Similarly, recognition accuracy is consistently at its highest when
$\lambda_2 = \lambda_3 = 0.1$, achieving the best recognition for
each set of $n$ views.
This demonstrates the value of our two introduced regularization
terms.
We also observe that when using only one of the sparsity-inducing norms
(i.e., setting one $\lambda$ parameter to $0.1$ and the other
to $0$) we see better performance when using $\| \www \|_1$,
forcing the use of the most representative robots.
This result indicates that relying on a better view provides
more accurate recognition than better features in random views.

Figure \ref{fig:dataset_table_mi} shows the amount of mutual information
between the ground truth object and input views, ordered by their
weights in $\www$.
We can see a clear trend line showing that 
views highly weighted by our approach also share a large amount of information
with the ground truth objects.

\begin{figure*}
    \centering
    \subfigure[Weight Vector $\www$]{
        \centering
        \includegraphics[height=1.65in]{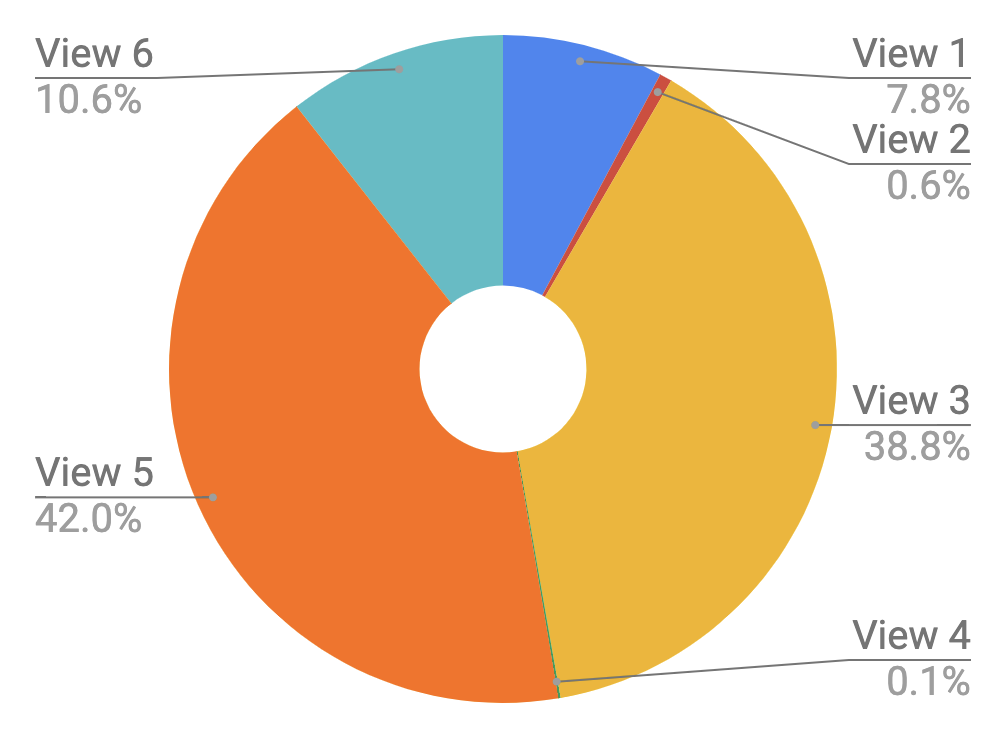}
        \label{fig:sparsity_w}
    }%
    \subfigure[Weight Vector $\uuu$]{
        \centering
        \includegraphics[height=1.65in]{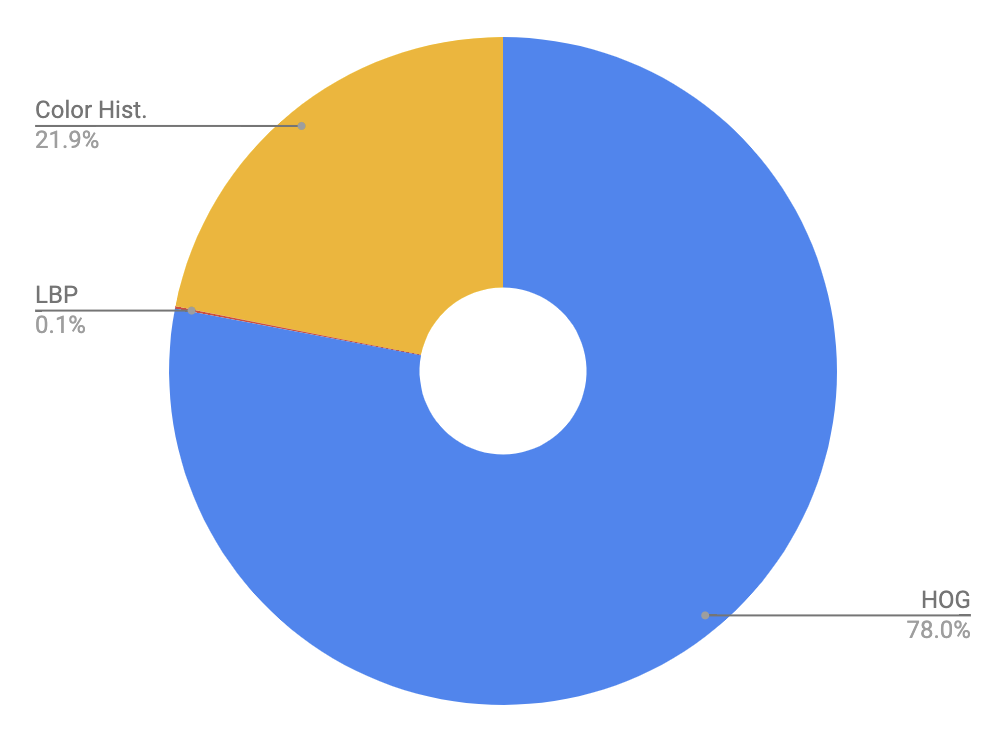}
        \label{fig:sparsity_u}
    }%
    \subfigure[$\lambda_2$ and $\lambda_3$]{
        \centering
        \includegraphics[height=1.65in]{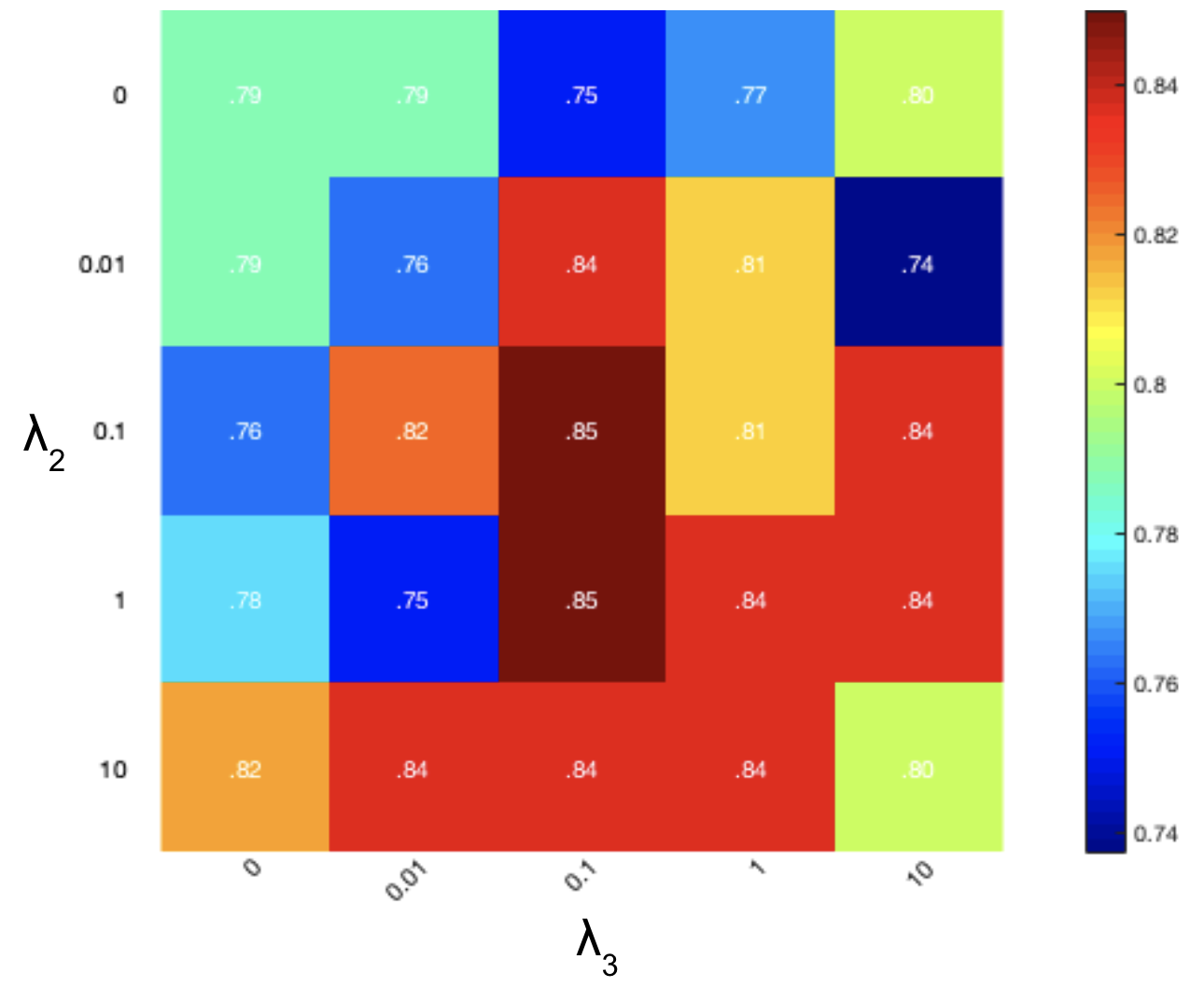}
        \label{fig:coil_l1_l2}
    }%
    \vneg
    \caption{These figures show the effect of our two introduced
    sparsity-based regularization terms to identify the most
    representative views and the most discriminative features.
    Figure \ref{fig:sparsity_setup} shows a testing iteration of
    the multi-robot system observing an object, with the six
    robot views.
    Figures \ref{fig:sparsity_w} and \ref{fig:sparsity_u} show
    the associated weight vectors indicating the weights assigned
    to various robots and features.
    Figure \ref{fig:coil_l1_l2} quantifies the effect of
    various hyperparameter values controlling the importance
    of the two regularization terms.
    }
    \label{fig:sparsity}
\end{figure*}

\subsection{Discussion}
\label{sec:disc}

\textbf{Discriminative View and Feature Modality Selection.}
We evaluate the performance of view and feature selection 
enabled by the two sparsity-inducing norms we introduce.
%
Figures \ref{fig:sparsity_w} and \ref{fig:sparsity_u} show the
distribution of weights for the testing iteration seen in
Figure \ref{fig:sparsity_setup}, where six robots observe
the helicopter object from the physical robot evaluation.
In this test, an obstacle is completely obstructing one robot's view
and partially obstructing the view of a second.
The four remaining robots have unobstructed views of the object,
but from varying angles.
We see values of nearly $0$ assigned to the two 
obstructed views in $\www$,
with small values assigned to two other poor views of the target
object.
Over $80\%$ of weights are assigned to two views, identifying these as
the most representative of the target.
In $\uuu$, we observe that the total weight for the LBP features
is nearly $0$, with HOG features being the primary discriminative
modality for this setup.
Both figures demonstrate the effectiveness of the induced sparsity
in identifying a small number of representative views and discriminative
modalities.

\textbf{Hyperparameter Analysis.}
We also evaluate the effect of the hyperparameters $\lambda_2$ and
$\lambda_3$, which control the importance of these sparsity-inducing norms,
on the recognition accuracy of our approach.
Figure \ref{fig:coil_l1_l2} shows this accuracy as the values
of these parameters change.
Primarily, we observe that setting either of these values
to $0$ achieves low relative accuracy, showing the necessity
of these regularization terms.
Similarly, there is low performance when $\lambda_2 = \lambda_3 = 10$,
as this causes the loss functions to be ignored.
We see that the highest
accuracy comes for $\lambda_2 = 0.1$ or $\lambda_2 = 1$
and $\lambda_3 = 0.1$, indicating that balancing the weight of
these terms versus the loss function achieves the best performance.

\section{Conclusion}
\label{sec:conclusion}

Collaborative perception allows a multi-robot system to perceive an environment
from multiple perspectives,
which is able to use the robots that have the best views
to obtain an optimal understanding of the environment.
We introduce a novel approach to collaborative multi-robot perception
that simultaneously incorporates view selection, feature selection, 
and object recognition
into a unified regularized optimization formulation.
Sparsity-inducing norms are designed to achieve the identification of
the most representative views and features.
We perform extensive evaluation with a
case-study in a high-fidelity simulator and evaluation on 
a physical multi-robot 
system, with our experimental results demonstrating
both accurate object recognition as well as effective
view and feature selection.

\section*{Appendix: Proof of of Theorem 1}

This section provides a proof that Algorithm 1
converges to an optimal value for $\www$ and $\uuu$ for each template,
with the inner loop decreasing the value of the objective function in
Eq. (5) in each iteration.

First, we present the following lemma from \cite{nie2010efficient}:
\begin{lemma}\label{lemma1}
For any two vectors $\mathbf{v}$ and $\mathbf{\tilde{v}}$, the following inequality relation holds:
$\|\mathbf{\tilde{v}}\|_2 - \frac{\|\mathbf{\tilde{v}}\|_2^2}{2\|\mathbf{v}\|_2}
\leq
\|\mathbf{v}\|_2 - \frac{\|\mathbf{v}\|_2^2}{2\|\mathbf{v}\|_2}
$.
\end{lemma}

\begin{proof}
We have:
\begin{eqnarray}
\left(\Vert\widetilde{\mathbf{v}}\Vert_{2}-\Vert\mathbf{v}\Vert_{2}\right)^2 \geq 0
\end{eqnarray}
\begin{eqnarray}
\Vert\widetilde{\mathbf{v}}\Vert_{2}^{2} -  2\Vert\widetilde{\mathbf{v}}\Vert_{2}\Vert\mathbf{v}\Vert_{2} + \Vert\mathbf{v}\Vert_{2}^{2} \geq 0
\end{eqnarray}
\begin{eqnarray}
\Vert\widetilde{\mathbf{v}}\Vert_{2} - \frac{\Vert\widetilde{\mathbf{v}}_2^2}{2\Vert\mathbf{v}\Vert_2} \leq \frac{\Vert\mathbf{v}\Vert_2}{2}
\end{eqnarray}
\begin{eqnarray}
\Vert\widetilde{\textbf{v}}\Vert_{2} - \dfrac{\Vert\widetilde{\textbf{v}}\Vert_{2}^{2}}{2\Vert\textbf{v}\Vert_{2}} \leq  \Vert\textbf{v}\Vert_{2} - \dfrac{\Vert\textbf{v}\Vert_{2}^{2}}{2\Vert\textbf{v}\Vert_{2}}
\end{eqnarray}
\end{proof}

Similarly, we see that 
\begin{eqnarray}
\Vert\widetilde{\textbf{v}}\Vert_1 - 
\dfrac{\Vert\widetilde{\textbf{v}}\Vert_2^2}{2\Vert\textbf{v}\Vert_1} \leq
\Vert\textbf{v}\Vert_1 - \dfrac{\Vert\textbf{v}\Vert_2^2}{2\Vert\textbf{v}\Vert_1}
\end{eqnarray}

Using these, we prove Theorem 1 from the main paper.
First, according to Algorithm 1, $\www$ and $\uuu$ are updated by
\begin{eqnarray}
\min_{\www,\uuu}  \| \XX^\top \www - \ttt \|_2^2 +
\| \XX \uuu - \www \|_2^2 + \lambda_1 \| \www \|_1 +
\lambda_2 \| \uuu \|_M
\label{eq:full}
\end{eqnarray}

Defining $\mathcal{L} \left( i + 1 \right) = \| \XX^\top \www - \ttt \|_2^2 +\| \XX \uuu - \www \|_2^2$, we derive that 
\begin{eqnarray}
\mathcal{L} \left( i + 1 \right) + 
\lambda_1 \www^\top \left( i + 1 \right) \mathbf{D}^w \left( i + 1 \right) \www \left( i + 1 \right) + \notag \\
\lambda_2 \uuu^\top \left( i + 1 \right) \mathbf{D}^u \left( i + 1 \right) \uuu \left( i + 1 \right) \notag \\
\leq 
\mathcal{L} \left( i \right) +
\lambda_1 \www^\top \left( i \right) \mathbf{D}^w \left( i \right) \www \left( i \right) + \notag \\
\lambda_2 \uuu^\top \left( i \right) \mathbf{D}^u \left( i \right) \uuu \left( i \right)
\end{eqnarray}

We then substitute in the definitions for $\mathbf{D}^w$ and $\mathbf{D}^u$:
\begin{eqnarray}
\mathcal{L} \left( i + 1 \right) + 
\lambda_1 \frac{\| \www \left( i + 1 \right) \|_2^2}{2 \| \www \left( i \right) \|_1} + \notag \\
\lambda_2 \sum_{i=1}^{m} \frac{\| \uuu^i \left( i + 1 \right) \|_2^2}{2 \| \uuu^i \left( i \right) \|_2} \notag \\
\leq 
\mathcal{L} \left( i \right) +
\lambda_1 \frac{\| \www \left( i \right) \|_2^2}{2 \| \www \left( i \right) \|_1}
 + \notag \\
\lambda_2 \sum_{i=1}^{m} \frac{\| \uuu^i \left( i \right) \|_2^2}{2 \| \uuu^i \left( i \right) \|_2}
\label{eq:sub}
\end{eqnarray}

From our earlier definitions in Lemma 1, we can define
\begin{eqnarray}
\| \www \left( i + 1 \right) \|_1 - \frac{\| \www \left( i + 1 \right) \|_2^2}{2 \| \www \left( i \right) \|_1} \leq \notag \\
\| \www \left( i \right) \|_1 - \frac{\| \www \left( i \right) \|_2^2}{2 \| \www \left( i + 1 \right) \|_1} \\
\sum_{i=1}^{m} \| \uuu^i \left( i + 1 \right) \|_2 - \sum_{i=1}^{m} \frac{\| \uuu^i \left( i + 1 \right) \|_2^2}{2 \| \uuu^i \left( i \right) \|_2} \leq \notag \\
\sum_{i=1}^{m} \| \uuu^i \left( i \right) \|_2 - \sum_{i=1}^{m} \frac{\| \uuu^i \left( i \right) \|_2^2}{2 \| \uuu^i \left( i \right) \|_2}
\end{eqnarray}

By add these two to Eq. (\ref{eq:sub}), we get
\begin{eqnarray}
\mathcal{L} \left( i + 1 \right) + 
\lambda_1 \| \www \left( i + 1 \right) \|_1 + 
\lambda_2 \sum_{i=1}^{m} \| \uuu^i \left( i + 1 \right) \|_2 \notag \\
\leq
\mathcal{L} \left( i \right) + 
\lambda_1 \| \www \left( i \right) \|_1 + 
\lambda_2 \sum_{i=1}^{m} \| \uuu^i \left( i \right) \|_2 
\end{eqnarray}
showing that the objective value decreases at each iteration.
As our formulation is convex and is lower bounded by zero,
our solution algorithm converges to an optimal solution.

\bibliographystyle{ieeetr}
\bibliography{references}

\end{document}